\newcommand{\abs}[1]{\left| #1 \right|}
\newcommand{\floor}[1]{\left\lfloor #1 \right\rfloor}
\newtheorem{theorem}{Theorem}[section]
\icmltitlerunning{Improving Positive Unlabeled Learning: Practical AUL Estimation and New Training Method for Extremely Imbalanced Data Sets}
\begin{document}

\twocolumn[
\icmltitle{Improving Positive Unlabeled Learning: Practical AUL Estimation and New Training Method for Extremely Imbalanced Data Sets}




\begin{icmlauthorlist}
\icmlauthor{Liwei Jiang}{to}
\icmlauthor{Dan Li}{to}
\icmlauthor{Qisheng Wang}{to}
\icmlauthor{Shuai Wang}{to}
\icmlauthor{Songtao Wang}{to}
\end{icmlauthorlist}

\icmlaffiliation{to}{Department of Computer Science, Tsinghua University, Beijing, China}
\icmlcorrespondingauthor{Liwei Jiang}{jlw17@mails.tsinghua.edu.cn}
\icmlcorrespondingauthor{Dan Li}{tolidan@tsinghua.edu.cn}
\icmlcorrespondingauthor{Qisheng Wang}{wqs17@mails.tsinghua.edu.cn}
\icmlcorrespondingauthor{Shuai Wang}{s-wang17@mails.tsinghua.edu.cn}
\icmlcorrespondingauthor{Songtao Wang}{wangst12@mails.tsinghua.edu.cn }

\icmlkeywords{Machine Learning, ICML}

\vskip 0.3in
]



\printAffiliationsAndNotice{}  

\begin{abstract}
Positive Unlabeled (PU) learning is widely used in many applications, where a binary classifier is trained on the data sets consisting of only positive and unlabeled samples. In this paper, we improve PU learning over state-of-the-art from two aspects. Firstly, existing model evaluation methods for PU learning requires ground truth of unlabeled samples, which is unlikely to be obtained in practice. In order to release this restriction, we propose an asymptotic unbiased practical AUL (area under the lift) estimation method, which makes use of raw PU data without prior knowledge of unlabeled samples.

Secondly, we propose ProbTagging, a new training method for extremely imbalanced data sets, where the number of unlabeled samples is hundreds or thousands of times that of positive samples. ProbTagging introduces probability into the aggregation method. Specifically, each unlabeled sample is tagged positive or negative with the probability calculated based on the similarity to its positive neighbors. Based on this, multiple data sets are generated to train different models, which are then combined into an ensemble model. Compared to state-of-the-art work, the experimental results show that ProbTagging can increase the AUC by up to $10\%$, based on three industrial and two artificial PU data sets.
\end{abstract}

\section{Introduction}
Positive unlabeled (PU) learning is a variant of positive negative (PN) learning and has a wide range of applications, where negative samples are unavailable. For example, in the field of credit card fraud detection \cite{26}, the bank can only obtain partial fraud cases through user complaints, but the other unlabelled users may also include fraudulent ones; in the field of e-commerce product recommendation \cite{25}, a user's favorite products are known according to the user's shopping cart, but it is difficult to obtain the products that the user does not like; in the field of network operation and maintenance, only a few abnormal events can be observed, but there are still many abnormal events that have not been found. In these cases, traditional semi-learning algorithms \cite{21,22,23,24} and supervised learning algorithms \cite{15,16,17} cannot be applied because of the absence of labeled negative samples.

In this work, we improve PU learning over state-of-the-art from two aspects, namely, model evaluation and model training. We not only propose a practical AUL estimation method without requiring prior knowledge of unlabeled samples, but also design a new method to improve the performance of extremely imbalanced PU learning, called ProbTagging.

Model evaluation plays an important role in PU learning, with the help of which we can validate a model and choose the best one among candidate models. However, existing evaluation methods either require class prior knowledge of unlabeled samples~\cite{10,32} or directly use PN labeled data as the testing data set~\cite{2,4,7}. In practice, it is very difficult, if not impossible at all, to obtain either ground truth of unlabeled samples or PN labeled data. To overcome this problem, we attempt to evaluate PU models using raw PU data sets only (without any class prior knowledge of the unlabeled samples). The major challenge is that the commonly used evaluation metrics are computed using fully labeled PN data sets rather than PU data sets. To solve this issue, we propose a practical AUL estimation method for PU learning, where only raw PU data sets are required. We show that the AUL computed with a PU data set is an asymptotic unbiased estimation for that computed with the corresponding PN data set.

In many practical PU learning applications, positive samples and unlabeled ones are extremely imbalanced, say, the number of positive samples is less than 5\% of total samples. Examples include credit card fraud detection and e-commerce recommendation. There is much research work to discuss PU learning \cite{2,4,7,10,32}. Most of them consider to use unlabeled samples to generate negative samples \cite{2,4,7}, or use unlabeled samples to estimate the risk of negative samples \cite{10,32}. But serious discussions of PU learning under extremely imbalanced data are insufficient. To this end, we design a new training method, called \emph{ProbTagging}.

ProbTagging is designed for PU learning with extremely imbalanced data sets and its key idea is to find as many positive samples as possible in unlabeled samples to mitigate the data imbalance. ProbTagging generates multiple PN data sets (even if not completely reliable) from a PU data set by labeling each unlabeled sample positive or negative with a certain probability according to the similarity to other labeled positive samples. The generated PN data sets contain more positive samples compared with raw PU data sets.
After generating multiple PN data sets, several models are trained using those generated PN data sets, which are then aggregated to obtain a final model. Compared with state-of-the-art work, the experimental results show that ProbTagging can increase the AUC by up to 10\%, based on three industrial and two artificial PU data sets.

The rest of this paper is organized as follows: Section \ref{relatedwork} discusses the background and related work. Section \ref{aul} presents the AUL estimation method and theoretical proofs. Section \ref{ProbTag} and Section \ref{experiment} describes the ProbTagging training method as well as the comparison results with baseline solutions, respectively.
Finally, Section \ref{sec6} concludes the paper.

\section{Background and Related Work}\label{relatedwork}
In this section, we formulate the problem setting, introduce notations used in this paper, and review related work of PN learning and PU learning.
\subsection{Problem Settings}
Let $\vec x \in \mathbb{R}^d$ and $y \in \{+1, -1\}$ be random variables from some distribution $\mathcal{D}$, where $d \in \mathbb{N}$ denotes the dimension of $\vec x$. Let $p(\vec x, y)$ be the joint probability density of $(\vec x, y)$, $p_\text{P}(\vec x) = p(\vec x | y = +1)$ and $p_\text{N}(\vec x) = p(\vec x | y = -1)$ the marginal probability density. Let $\theta_\text{P} = \Pr_{(\vec x, y) \sim \mathcal{D}}[y = +1]$ and $\theta_\text{N} = \Pr_{(\vec x, y) \sim \mathcal{D}}[y = -1]$ be the class prior probabilities for the positive and negative classes with $\theta_\text{P} + \theta_\text{N} = 1$. Moreover, we denote $\mathcal{D}_\text{P}$ and $\mathcal{D}_\text{N}$ to be the distribution of positive and negative samples defined by $p_\text{P}(\vec x)$ and $p_\text{N}(\vec x)$, respectively.
A raw set of samples is independent and identically distributed from $\mathcal{D}$ as $\mathcal{X} = \{(\vec x_i, y_i) \}_{i=1}^{n}$.

Let $g: \mathbb{R}^d \to \mathbb{R}$ be an arbitrary real-valued decision function for binary classification, which produces the cumulative function of $g(\vec x)$ with $\vec x$ selected from a distribution $\mathcal{D}$:
\[
    F(\xi|g, \mathcal{D}) = \Pr_{x\sim\mathcal{D}}[g(\vec x) \leq \xi].
\]
Especially, the empirical cumulative function of $g(\vec x)$ with $\vec x$ uniformly selected from a set $\mathcal{X}$ of samples is denoted by
\[
    F(\xi|g, \mathcal{X}) = \frac 1 n \sum_{i=1}^n \mathbb{I}(g(\vec x_i) \leq \xi).
\]

In binary classification, a threshold $s$ is set to distinguish negative and positive samples. More precisely, those samples $\vec x$ with $g(\vec x) \geq s$ are classified by $g$ as positive samples.
For $0 < p < 1$, the threshold is chosen to be
\[
s_p(g, \mathcal{X}) = \inf \{ x: F(x|g, \mathcal{X}) \geq 1-p \},
\]
which is also known as the $p$-th quantile on $\mathcal{X}$.
The value of $s_p(g, \mathcal{X})$ means that there are $\floor{p|\mathcal{X}|}$ samples $\vec x$ with $g(\vec x)$ greater than or equal to $s_p(g, \mathcal{X})$, therefore, samples with a ratio of $p$ are classified by $g$ as positive samples.
\subsection{PN Learning}
In the PN learning, two sets $\mathcal{X}_\text{P}$ and $\mathcal{X}_\text{N}$ of data are obtained from the raw set $\mathcal{X}$ by $\mathcal{X}_\text{P} = \{ \vec x_i: y_i = +1, 1 \leq i \leq n \}$ and $\mathcal{X}_\text{N} = \{ \vec x_i: y_i = -1, 1 \leq i \leq n \}$, respectively. The raw set $\mathcal{X}$ is called a PN data set in the sense of PN learning. The sensitivity of $g$ on a raw set $\mathcal{X}$ of data is denoted as
\[
    \alpha(s|g, \mathcal{X}) = \frac 1 {\abs{\mathcal{X}_\text{P}}} \sum_{\vec x \in \mathcal{X}_\text{P}} \mathbb{I}(g(\vec x) \geq s).
\]

Basically all supervised learning is PN learning in the case of binary classification, e.g. logistic regression \cite{34}, SVM (support vector machine) \cite{14} and deep neural network \cite{15,16,17}.
\subsection{PU Learning}
PU learning is a kind of the classification learning in the case that we have only unlabeled samples and some distinguished positive samples.

In the PU learning, the raw set $\mathcal{X}$ of data is observed to be an observed set $\mathcal{\hat X} = \{(\vec x_i, \hat y_i) \}_{i=1}^{n}$, and two sets $\mathcal{\hat X}_\text{P}$ and $\mathcal{\hat X}_\text{U}$ of data called the \textit{positive}(P) and \textit{unlabeled}(U) data are obtained from the observed set $\mathcal{\hat X}$ by $\mathcal{\hat X}_\text{P} = \{ \vec x_i: \hat y_i = 1, 1 \leq i \leq n \}$ and $\mathcal{\hat X}_\text{U} = \{ \vec x_i: \hat y_i = 0, 1 \leq i \leq n \}$, respectively. The raw set $\mathcal{\hat X}$ is called a PU data set in the sense of PU learning.

In our case, only part of positive samples are observed to be positive, while the rest samples (the rest of positive samples and all negative samples) remain unknown. We use $\hat y$ to denote the observed tag for every $(\vec x, y)$ with $\hat y = 1$ for observed and $\hat y = 0$ for unknown. We assume that
\begin{enumerate}
  \item A positive sample is independently observed to be positive with probability $\theta_\text{O}$ and remains unknown with probability $1-\theta_\text{O}$, i.e. $\Pr[\hat y=1 | y = +1] = \theta_\text{O}$ and $\Pr[\hat y=0 | y = +1] = 1-\theta_\text{O}$.
  \item All negative samples remain unknown with certainty, i.e. $\Pr[\hat y=0 | y = -1] = 1$.
\end{enumerate}
The sensitivity of $g$ on an observed set $\mathcal{\hat X}$ of data is denoted as
\[
    \alpha(s|g, \mathcal{\hat X}) = \frac 1 {\abs{\mathcal{\hat X}_\text{P}}} \sum_{\vec x \in \mathcal{\hat X}_\text{P}} \mathbb{I}(g(\vec x) \geq s).
\]

Practical PU learning includes two aspects: model evaluation and model training.

In many scenes of PU model evaluation \cite{2,4,7,8}, there is lack of PN data sets for testing, in which case we have only an observed data set $\mathcal{\hat X}$ rather than a ground-truth data set to evaluate a model. Some risk estimators \cite{10,32} can be applied to evaluate the models using PU test sets, however, the class prior knowledge is also required. \cite{33} conducts unbiased AUC risk estimation for semi-supervised learning but not for PU learning.

In the field of PU model training, some work \cite{1,2} directly uses unlabeled samples as weighted negative samples and trains them together with positive samples to get a binary classifier. Work in \cite{3,4,5} uses the two-step method, which performs two steps as follows: firstly identifies negative examples from the unlabeled examples, and secondly builds a classifier to classify rest of unlabeled samples iteratively. The most important difference of these two-step methods is the use of different algorithms in the first step. The BaggingPU method \cite{7} uses bagging classifiers for PU learning. Several binary classifiers of BaggingPU are trained with all positive samples (work as positive samples) and some samples randomly extracted from the unlabeled samples (work as negative samples), which are then aggregated to the final model. \cite{9} first shows that PU learning can be solved by cost-sensitive learning, and the intrinsic bias can be eliminated by some non-convex loss functions. Later, a convex formulation for PU classification proposed by \cite{35} can also cancel the bias. The work in \cite{10} improves the unbiased risk estimators \cite{9,35} and proposes nnPU (non-negative PU learning) for PU learning. In addition, \cite{32} first proposes the risk estimator for AUC optimization of PU data. The state-of-the-art risk estimators \cite{10,32} do own a beautiful theoretical foundation, but when observed positive samples are rare, these estimators may not have very accurate estimation of negative samples' risk.

\section{AUL Estimation without Class Prior Knowledge} \label{aul}
Below we provide a theorem that establishes the relationship between the observed sensitivity $\alpha(s|g, \mathcal{\hat X})$ and latent ground true sensitivity $\alpha(s|g, \mathcal{X})$. Based on this, we give an AUL estimation for PU testing.

\textit{Lift curve} \cite{29} is a variant of the ROC curve. For a lift curve of $g$ on $\mathcal{X}$, its abscissa is $p$, and ordinate is $\alpha(s_p|g, \mathcal{X})$.
The area under lift curve is called AUL. For a PN data set, it can be calculated by
$\text{AUL}_\text{PN} = \int_{0}^{1} \alpha(s_p|g, \mathcal{X}) \mathrm{d}p.
$
\cite{29} proves that
$ \text{AUC} -  \text{AUL}_\text{PN}\ = \theta_\text{P}(\text{AUC} - 0.5).$
It implies that $\text{AUL}_\text{PN}$ also can decide whether a model is better than another. In PU testing, we approximate $\text{AUL}_\text{PN}$ of $g$ on $\mathcal{X}$ by measuring the $\text{AUL}_\text{PU}$ of $g$ on $\mathcal{\hat X}$, i.e. $\text{AUL}_\text{PU} = \int_{0}^{1} \alpha(s_p|g, \mathcal{\hat X}) \mathrm{d}p$, and then evaluate models. Theorem \ref{thm:main} explains that the gap between $\text{AUL}_\text{PU}$ and $\text{AUL}_\text{PN}$ becomes very small when $n$ is sufficiently large.

\begin{theorem} \label{thm:main}
For every $0 < p < 1$, let $s_p = s_p(g, \mathcal{X})$. If $\xi_p$ is the unique solution $\xi$ of $F(\xi-|g, \mathcal{D}) \leq 1-p \leq F(\xi|g, \mathcal{D})$, and $F(\xi|g, \mathcal{D}_\textit{P})$ is continuous at $\xi_p$, then for every $\epsilon > 0$, we have
\begin{equation}\label{s1}
\begin{aligned}
    \Pr \left[ \abs{\alpha(s_p|g, \mathcal{X}) - \alpha(s_p|g, \mathcal{\hat X})} > \epsilon \right] & \leq 4e^{-2n\gamma^2} \\ + 4 \left[1-\theta_\text{P}\theta_\text{O}\left(1-e^{-\epsilon^2/18}\right)\right]^n,
\end{aligned}
\end{equation}
where $\gamma = \min\{F(\xi_p + \delta|g, \mathcal{D}) - (1-p), (1-p) - F(\xi_p - \delta|g, \mathcal{D})\} > 0$ for some $\delta > 0$ with $F(\xi_p + \delta|g, \mathcal{D}_\text{P}) - F((\xi_p - \delta)-|g, \mathcal{D}_\text{P}) \leq \epsilon /6 $.

Moreover, if $F(\xi| g, \mathcal{D})$ and $F(\xi| g, \mathcal{D}_\text{P})$ are strictly increasing with respect to $\xi$, then
\begin{equation} \label{s2}
\lim_{n\to\infty} \Pr[\text{AUL}_\text{PU} = \text{AUL}_\text{PN}] = 1.
\end{equation}
\end{theorem}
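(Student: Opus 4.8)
The plan is to establish the pointwise concentration bound~\eqref{s1} first, which is the key estimate, and then to derive the limit~\eqref{s2} from it by integrating over $p$. For~\eqref{s1} the main difficulty is that the threshold $s_p=s_p(g,\mathcal{X})$ is itself random and is built from the very scores $g(\vec x_i)$ that enter both $\alpha(s_p|g,\mathcal{X})$ and $\alpha(s_p|g,\mathcal{\hat X})$, so the two averages are strongly dependent. I would break this coupling by showing that each of them is within $\epsilon/2$ of the single \emph{deterministic} number $\beta:=\Pr_{\vec x\sim\mathcal{D}_\text{P}}[g(\vec x)\ge\xi_p]$ and then concluding by the triangle inequality. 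The structural fact that makes this possible is that, by Assumption~1, the event $\{\hat y=1\}$ is independent of $\vec x$ given $y=+1$; hence $\mathcal{\hat X}_\text{P}$ arises from $\mathcal{X}_\text{P}$ by retaining each element independently with probability $\theta_\text{O}$, so conditionally on its cardinality the scores in $\mathcal{\hat X}_\text{P}$ are i.i.d.\ $\sim\mathcal{D}_\text{P}$ exactly like those in $\mathcal{X}_\text{P}$, while $\abs{\mathcal{\hat X}_\text{P}}\sim\mathrm{Binomial}(n,\theta_\text{P}\theta_\text{O})$ and $\abs{\mathcal{X}_\text{P}}\sim\mathrm{Binomial}(n,\theta_\text{P})$.

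The proof of~\eqref{s1} then splits into two ingredients. \emph{(i) Quantile localization.} Since $F(\cdot|g,\mathcal{X})$ is an average of $n$ i.i.d.\ indicators with mean $F(\cdot|g,\mathcal{D})$, Hoeffding's inequality applied at the deterministic points $\xi_p\pm\delta$, via $\{s_p>\xi_p+\delta\}\subseteq\{F(\xi_p+\delta|g,\mathcal{X})<1-p\}$ and $\{s_p<\xi_p-\delta\}\subseteq\{F(\xi_p-\delta|g,\mathcal{X})\ge1-p\}$ together with the strict inequalities $F(\xi_p-\delta|g,\mathcal{D})<1-p<F(\xi_p+\delta|g,\mathcal{D})$ (equivalently $\gamma>0$, which holds because $\xi_p$ is the \emph{unique} solution), gives $\Pr[\abs{s_p-\xi_p}>\delta]\le2e^{-2n\gamma^2}$. \emph{(ii) Sandwiching plus a subsample bound.} On $\{\abs{s_p-\xi_p}\le\delta\}$, monotonicity of $s\mapsto\alpha(s|g,\cdot)$ squeezes $\alpha(s_p|g,\cdot)$ between the empirical survival function of $\mathcal{X}_\text{P}$ (resp.\ $\mathcal{\hat X}_\text{P}$) evaluated at the deterministic points $\xi_p\pm\delta$; conditioning on the binomial size $k$ and applying Hoeffding to those i.i.d.\ indicators bounds the deviation from $\Pr_{\mathcal{D}_\text{P}}[g\ge\xi_p\pm\delta]$ by $\epsilon/6$ with conditional failure probability $\le2e^{-k\epsilon^2/18}$, and averaging over $k$ via $\mathbb{E}[e^{-tK}]=(1-\theta_\text{P}\theta_\text{O}(1-e^{-t}))^n$ (the rate $\theta_\text{P}$ for $\mathcal{X}_\text{P}$ only improves this) produces the second term of~\eqref{s1}. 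The choice of $\delta$ forces $\abs{\Pr_{\mathcal{D}_\text{P}}[g\ge\xi_p\pm\delta]-\beta}\le\epsilon/6$, whose feasibility uses continuity of $F(\cdot|g,\mathcal{D}_\text{P})$ at $\xi_p$; so on the intersection of the good events $\abs{\alpha(s_p|g,\cdot)-\beta}\le\epsilon/2$, and a triangle inequality together with a union bound over $\mathcal{X}$ and $\mathcal{\hat X}$ yields~\eqref{s1}, up to the bookkeeping of the numerical constants.

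For~\eqref{s2} I would integrate~\eqref{s1}: by Fubini, $\mathbb{E}\,\abs{\text{AUL}_\text{PU}-\text{AUL}_\text{PN}}\le\int_0^1\mathbb{E}\,\abs{\alpha(s_p|g,\mathcal{\hat X})-\alpha(s_p|g,\mathcal{X})}\,dp$ since the integrand is bounded by $1$; I would split this as $\int_0^\eta+\int_\eta^{1-\eta}+\int_{1-\eta}^1$, bound the two outer pieces by $2\eta$ trivially, and apply~\eqref{s1} on $[\eta,1-\eta]$. Here strict monotonicity of $F(\cdot|g,\mathcal{D})$ rules out flat spots, and together with the (uniform) continuity of $F(\cdot|g,\mathcal{D}_\text{P})$ on a compact interval it lets one pick a single $\delta=\delta(\eta,\epsilon)>0$ and a uniform $\gamma=\gamma(\eta,\epsilon)>0$ valid for all $p\in[\eta,1-\eta]$ (a positive continuous function of $p$ on a compact set is bounded below). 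This gives $\mathbb{E}\,\abs{\text{AUL}_\text{PU}-\text{AUL}_\text{PN}}\le2\eta+\epsilon+4e^{-2n\gamma(\eta,\epsilon)^2}+4[1-\theta_\text{P}\theta_\text{O}(1-e^{-\epsilon^2/18})]^n$; letting $n\to\infty$ and then $\eta,\epsilon\to0$ shows $\text{AUL}_\text{PU}-\text{AUL}_\text{PN}\to0$ in $L^1$, hence in probability, which is the content of~\eqref{s2}. Alternatively, one can bypass~\eqref{s1} and the monotonicity hypothesis by observing that $\text{AUL}_\text{PN}=\frac{1}{\abs{\mathcal{X}_\text{P}}}\sum_{\vec x_i\in\mathcal{X}_\text{P}}F(g(\vec x_i)|g,\mathcal{X})$ and $\text{AUL}_\text{PU}=\frac{1}{\abs{\mathcal{\hat X}_\text{P}}}\sum_{\vec x_i\in\mathcal{\hat X}_\text{P}}F(g(\vec x_i)|g,\mathcal{X})$, so that $\text{AUL}_\text{PU}$ is the empirical mean of a $\mathrm{Bernoulli}(\theta_\text{O})$ subsample of the $[0,1]$-valued population $\{F(g(\vec x_i)|g,\mathcal{X}):\vec x_i\in\mathcal{X}_\text{P}\}$ whose population mean is $\text{AUL}_\text{PN}$, and then applying Hoeffding together with $\abs{\mathcal{\hat X}_\text{P}}\to\infty$.

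I expect the crux to be parts (i)--(ii): decoupling the data-dependent threshold $s_p$ from the observed-positive subsample. Routing everything through the deterministic quantile $\xi_p$ and the deterministic probe points $\xi_p\pm\delta$ sidesteps the dependence, and the ``each positive is retained independently with probability $\theta_\text{O}$, irrespective of its features'' structure is exactly what makes $\mathcal{\hat X}_\text{P}$ an i.i.d.\ $\mathcal{D}_\text{P}$ sample of binomial size, whose moment generating function produces the characteristic factor $[1-\theta_\text{P}\theta_\text{O}(1-e^{-\epsilon^2/18})]^n$; arranging the continuity hypothesis to absorb exactly the $\epsilon/6$ oscillation and driving the constants into the clean form of~\eqref{s1} is the fiddly remainder. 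A secondary point in~\eqref{s2} is that $\gamma$ degenerates as $p\to0$ or $p\to1$, which is the reason for the $\eta$-truncation.
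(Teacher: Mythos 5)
Your proof of Eq (\ref{s1}) is essentially the paper's: both localize the random threshold $s_p$ near the deterministic quantile $\xi_p$ via the $2e^{-2n\gamma^2}$ quantile/Hoeffding bound, use the continuity hypothesis at $\xi_p$ to absorb the resulting oscillation into an $\epsilon/6$ budget, and control the positive subsamples by conditioning on their binomial sizes and averaging the conditional Hoeffding bound through the moment generating function, which is exactly where the factor $\left[1-\theta_\text{P}\theta_\text{O}\left(1-e^{-\epsilon^2/18}\right)\right]^n$ comes from; the only difference is bookkeeping (you sandwich both sensitivities around the population quantity $\Pr_{\vec x\sim\mathcal{D}_\text{P}}[g(\vec x)\ge\xi_p]$, while the paper splits into four terms $q_1,\dots,q_4$ through the empirical distribution functions of the positive subsamples at $\xi_p$). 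Your derivation of Eq (\ref{s2}) --- truncating $p$ away from $\{0,1\}$ and using strict monotonicity plus compactness to obtain a uniform $\gamma$ --- is in fact more detailed than the paper's, which asserts the implication without argument; the one caveat is that both routes really deliver convergence in probability of $\text{AUL}_\text{PU}-\text{AUL}_\text{PN}$ to zero rather than the literal event $\Pr[\text{AUL}_\text{PU}=\text{AUL}_\text{PN}]\to 1$ written in the statement.
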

\begin{proof}
It can be easily seen that Eq (\ref{s1}) implies Eq (\ref{s2}), and we only prove Eq (\ref{s1}) here.

We make the left hand side bounded by the summation of four terms, i.e.
\[
\Pr [| \alpha(s_p|g, \mathcal{X}) - \alpha(s_p|g, \mathcal{\hat X}) | > \epsilon ] \leq q_1+q_2+q_3+q_4,
\]
where
\begin{align*}
 q_1 & = \Pr \left[\abs{ 1-\alpha(s_p|g, \mathcal{X}) - F(\xi_p|g,\mathcal{X}_\text{P})}  > \epsilon_1 \right], \\
 q_2 &= \Pr [| 1-\alpha(s_p|g, \mathcal{\hat X}) - F(\xi_p|g,\mathcal{\hat X}_\text{P})|  > \epsilon_2 ], \\
 q_3 &= \Pr [| F(\xi_p|g,\mathcal{X}_\text{P}) - F(\xi_p|g,\mathcal{D}_\text{P}) |  > \epsilon_3 ], \\
 q_4 &= \Pr [| F(\xi_p|g,\mathcal{\hat X}_\text{P}) - F(\xi_p|g,\mathcal{D}_\text{P}) |  > \epsilon_4 ],
\end{align*}
and $\epsilon_1 + \epsilon_2 + \epsilon_3 + \epsilon_4 = \epsilon$.

We select $q_1$ for example. We introduce a trade-off $\delta_1$ on $\abs{s_p-\xi_p}$. It is not difficult to see that
\begin{align*}
 q_1 \leq &
    \Pr\left[\abs{s_p - \xi_p} > \delta_1 \right] \\ &+ \Pr\left[ \frac{1}{\abs{\mathcal{X}_\text{P}}}\sum_{\vec x \in \mathcal{X}_\text{P}}\mathbb{I}(\abs{g(\vec x) - \xi_p}\leq \delta_1) > \epsilon_1 \right].
\end{align*}
According to the property of the distribution and sample $p$-th quantiles \cite{28}, we have
\[\Pr\left[\abs{s_p - \xi_p} > \delta_1\right] \leq 2e^{-2n\gamma_1^2},\]
where
\begin{equation} \label{eq1}
\begin{aligned}
\gamma_1 = \min\{ & F(\xi_p + \delta_1 |g,\mathcal{D})-(1-p), \\ & (1-p) - F(\xi_p - \delta_1 |g,\mathcal{D})\}.
\end{aligned}
\end{equation}
By Hoeffding's Theorem \cite{28}, we have
\[
\Pr\left[\sum_{\vec x \in \mathcal{ X}_P}\mathbb{I}(\abs{g(\vec x) - \xi_p}\leq \delta_1) > k \epsilon_1 \right] \leq e^{-2k(\epsilon_1-\zeta_1)^2},
\]
where
\begin{equation} \label{eq2}
\zeta_1 = F(\xi_p+\delta_1|g,\mathcal{D}_\text{P}) - F((\xi_p-\delta_1)-|g,\mathcal{D}_\text{P}).
\end{equation}
Finally, we have
\begin{align*}
q_1 & \leq 2e^{-2n\gamma_1^2} + \sum_{k=0}^n \Pr\left[\abs{\mathcal{ X}_\text{P}} = k\right] e^{-2k(\epsilon_1-\zeta_1)^2} \\
& = 2e^{-2n\gamma_1^2} + \sum_{k=0}^n \binom{n}{k} \theta_\text{P}^k (1-\theta_\text{P})^k e^{-2k(\epsilon_1-\zeta_1)^2} \\
& = 2e^{-2n\gamma_1^2}+\left[1-\theta_\text{P}\left(1-e^{-2(\epsilon_1 - \zeta_1)^2}\right)\right]^n.
\end{align*}
Similarly for $q_2$, after introducing trade-off $\delta_2$ on $\abs{s_p-\xi_p}$, we have
\[
q_2 \leq 2e^{-2n\gamma_2^2}+\left[1-\theta_\text{P}\theta_\text{O}\left(1-e^{-2(\epsilon_2 - \zeta_2)^2}\right)\right]^n,
\]
where $\gamma_2$ and $\zeta_2$ are defined for $\delta_2$ by Eq (\ref{eq1}) and (\ref{eq2}) replacing the indices. Again by Hoeffding's Theorem and similar techniques, we can furthermore obtain that
\begin{align*}
q_3 & \leq \left[1 - \theta_\text{P}\left(1-e^{-2\epsilon_3^2}\right) \right]^n, \\
q_4 & \leq \left[1 - \theta_\text{P}\theta_\text{O}\left(1-e^{-2\epsilon_4^2}\right) \right]^n.
\end{align*}
Therefore, the proof is completed by combining with the above inequalities with $\epsilon_1 = \epsilon_2 = \epsilon/3$, $\epsilon_3 = \epsilon_4 = \epsilon/6$ and $\delta_1 = \delta_2$ such that $\zeta_1 = \zeta_2 = \epsilon/6$.
\end{proof}
We note that Theorem \ref{thm:main} implies that in the general case, the values of observed and latent ground true sensitivities are expected to coincide when the number of samples is sufficiently large.
In other words, $\alpha(s_p|g, \mathcal{\hat X})$ is an asymptotic unbiased estimation for $\alpha(s_p|g, \mathcal{X})$, and the so is $\text{AUL}_\text{PU}$ for $\text{AUL}_\text{PN}$.

A raw PU test set can be directly used to calculate $\alpha(s_p|g, \mathcal{\hat X})$ and $\text{AUL}_\text{PU}$ according to prediction probability given by the model. In Theorem \ref{thm:main}, we prove the upper bound of the estimation error of $\alpha(s_p|g, \mathcal{\hat X})$ , which has a negative exponential relation with the number of samples. When $n$ goes to infinity, the probability that $\text{AUL}_\text{PU}$ and $\text{AUL}_\text{PN}$ are equal is 1. This shows that using $\text{AUL}_\text{PU}$ to estimate $\text{AUL}_\text{PN}$  is feasible.

According to the relationship $\text{AUC} -  \text{AUL}_\text{PN}\ = \theta_\text{P}(\text{AUC} - 0.5)$ and Theorem \ref{thm:main}, $\text{AUL}_\text{PU}$ is a valid replacement for AUC on PU model evaluation. It is not necessary to have knowledge of class prior probability $\theta_\text{P}$ when choosing the best one among candidate models.
\section{Design of ProbTagging} \label{ProbTag}
In this section, we introduce the idea of ProbTagging, and present a specific algorithm of ProbTagging.
\subsection{Overview}
ProbTagging is designed for PU learning with extremely imbalanced data sets, where the number of positive samples is less than 5\% of the overall samples. The cases widely exist in many application scenarios, such as credit card fraud detection, e-commerce product recommendation, etc. The key idea of ProbTagging is to find as many positive samples as possible in unlabeled samples to mitigate the data imbalance, thus ProbTagging neither treats unlabeled samples as weighted negative samples \cite{2}, nor directly generates many new data sets that are considered to be negative data sets by sampling uniformly from unlabeled samples \cite{7}.

Specifically, ProbTagging takes the following ideas.
\begin{enumerate}
\item For each unlabeled sample, calculate the similarity between it and positive samples,
     and then tag it positive or negative with probability according to the similarity;
\item Repeat Step 1 for $m$ times, and obtain $m$ PN data sets (even if not completely reliable).
\item Train $m$ models using the $m$ PN data sets obtained in Step 2, and then aggregate them to a final model.
\end{enumerate}

Figure \ref{fig0} shows the schematic diagram of ProbTagging. We can see that new PN data sets have more positive samples, and ProbTagging provides a sample enhancement technique for extremely imbalanced PU learning. ProbTagging is an abstract of training methods. In this way, one can choose different calculation methods of similarity in Step 1 and different models as their base classifiers in Step 3.
\begin{figure*}[t]
\centering
\includegraphics[width=1.8\columnwidth]{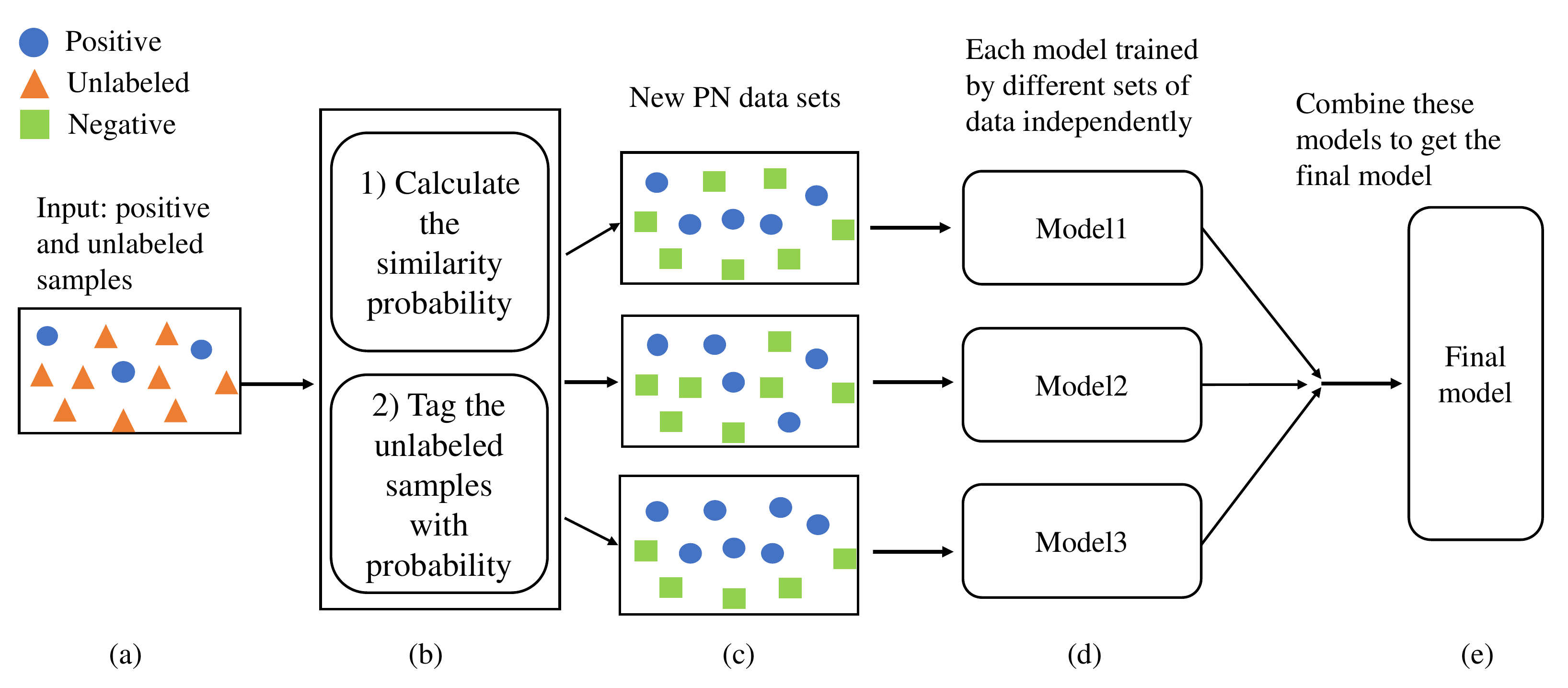} 
\caption{Schematic diagram of the ProbTagging when $m=3$. Blue circles indicate positive samples, orange triangles indicate unlabeled samples, and green squares indicate negative samples.}
\label{fig0}
\end{figure*}

\subsection{Algorithm}
This part provides a specific algorithm of ProbTagging. We suggest that similarity between unlabeled samples and positive samples can be calculated by the $k$-NN algorithm, and give a strategy for selecting the parameter $k$ with the characteristics of the ProbTagging.
\subsubsection{Three Steps of ProbTagging} \label{implement}
We describe the algorithm of each step of ProbTagging three steps.

\textbf{Step 1}. We calculate the $k$-nearest neighbors in the Euclidean distance of each sample in the observed set. Let $N_k(i)$ be the set of $k$-nearest neighbors in $\mathcal{\hat X}$ of sample $\vec x_i$. For each $\vec x_i \in \mathcal{\hat X}_\text{U}$, the similarity $\mathit{Cred}_k(i)$ is selected to be the proportion of positive samples in $N_k(i)$, i.e.
\[
\mathit{Cred}_k(i) = \frac{1}{k}\sum_{j \in N_k(i)}\mathbb{I}(\hat{y}_j = 1).
\]

We tag the unlabeled samples according to $\mathit{Cred}_k(i)$ via uniform distribution. More precisely, for each $\vec x_i \in \mathcal{\hat X}_\text{U}$, let $\beta_i$ uniformly random from $U[0,1]$, where $U[0,1]$ denotes the uniform distribution in the range $[0,1]$. Let \[\hat y_i^{*} = 2 \cdot \mathbb{I}(\beta_i \leq \mathit{Cred}_k(i))-1.\] After tagging all unlabeled samples, we obtain a PN data set $\mathcal{\hat X}^{*} = \{ (\vec x_i, \hat y_i^*): 1 \leq i \leq n \}$. Two data sets $\mathcal{\hat X}_\text{P}^{*}$ and $\mathcal{\hat X}_\text{N}^{*}$ are obtained from the PN data set $\mathcal{\hat X}^{*}$ by $\mathcal{\hat X}_\text{P}^{*} = \{ \vec x_i: \hat y_i^{*} = 1, 1 \leq i \leq n \}$ and $\mathcal{\hat X}_\text{U} = \{ \vec x_i: \hat y_i^{*} = -1, 1 \leq i \leq n \}$, respectively.

\textbf{Step 2}. Repeat Step 1 for $m$ times. Let $\mathcal{\hat X}_j^{*}$ be the $j$-th PN data set we obtain. In this implementation, the base classifier is selected to be GBDT \cite{38}. We denote the $\hat g_j^*$ to be model trained by GBDT with $\mathcal{\hat X}_j^{*}$.

\textbf{Step 3}. The final model is defined to be the average value of $\hat g_j^*$. i.e.
\[g(\vec{x}) = \frac 1 m \sum_{j=1}^m \hat{g}_j^*(\vec x).\]

\subsubsection{Parameter Selection}\label{sec-selectk}
For selecting a suitable parameter $k$, let $E_k$ be the expected number of unlabeled samples that are tagged positive. That is, \[E_k = \sum_{i}\mathit{Cred_k}(i)\cdot\mathbb{I}(\hat y_i = 0).\]

We choose the Credit Card Fraud data set in Table \ref{table1} as an illustrative example, the function image of $E_k$ with respect to $k$ is shown Figure \ref{fig3}. (The information of all data sets is later described in Section \ref{dataset}.)

As we can see, the value of $E_k$ changes in two stages as $k$ increases.
\begin{enumerate}
\item Initial stage: $E_k$ increases significantly as $k$ increases.
In this stage, if the value of $k$ slightly increases, a remarkable number of unlabeled samples will be tagged positive, which makes good use of the only positive samples.

\item Stable stage: $E_k$ changes little and decreases slightly as $k$ increases.
In this stage, increasing $k$ brings few benefits. As a trade-off, small $k$ does not make full use of the positive samples, and large $k$ requires too much calculation.

\end{enumerate}

Therefore, suitable values of $k$ are chosen around the interchange of the two stages. For example, the value of $k$ can be between 16 and 20 in Figure \ref{fig3}.

\begin{figure}[t]
\centering
\includegraphics[width=0.98\columnwidth]{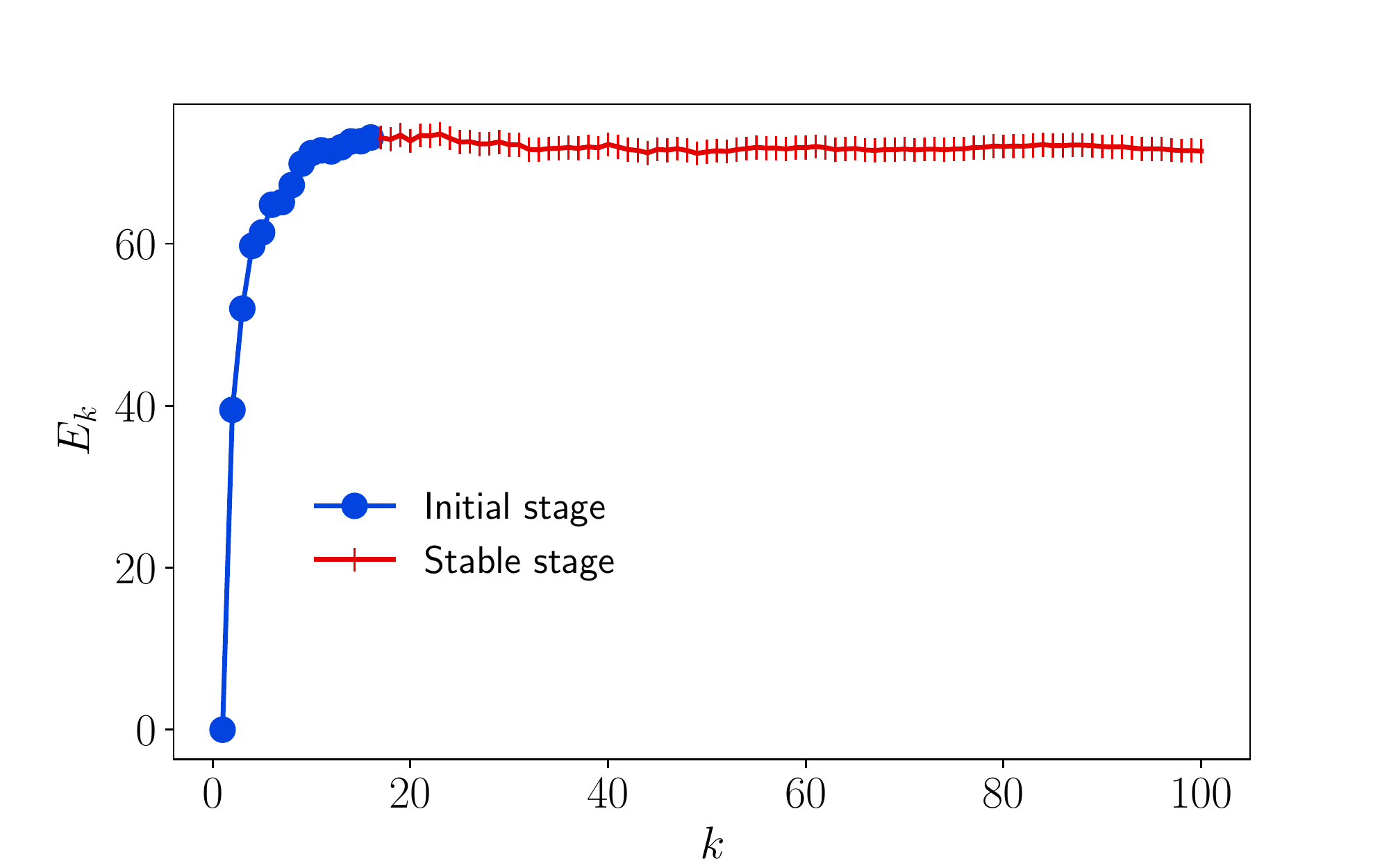} 
\caption{The image of $E_k$ with respect to $k$ induced by a training set of some fold of Credit Card Fraud data set.}
\label{fig3}
\end{figure}

\section{Experimental Evaluation}\label{experiment}
In this section, we first introduce the preparation of data sets and the evaluation method of the experiments. Finally, we show the performance of ProbTagging on different data sets compared to several other PU learning methods.
\subsection{Experimental Setting}
\subsubsection{Data Sets} \label{dataset}
Five data sets are used in our experiment, three of which are real-world raw PU data sets in the bank scenario, and the rest two are from UCI and kaggle databases. See Table \ref{table1} for more information about data sets. As we can see in Table \ref{table1}, these data sets are all extremely imbalanced.

The real-world data sets Bank1, Bank2 and Bank3 from bank scenario are used for distinguishing those credit card customers that have demands for savings cards. The acquisition process of the data sets is set up in two stage. In the first stage, the bank can obtain the features of the credit card customers; and in the second stage, the credit card customers who have processed the savings card at this stage are considered as positive samples. The rest of credit card customers do not have a savings card in the designated observation period (the second stage), but it does not mean that these customers do not need savings cards at all, therefore the labels of these credit card customers remain unknown. We regard those customers as unlabeled samples. Clearly, it can be seen that PU learning fits in the bank scenario.

The data sets APS from UCI \cite{Dua:2019} and Credit Card Fraud from kaggle \cite{credit-kaggle} are PN data sets for binary classification. They need to be manually manipulated to PU data sets by selecting a proportional parameter $\theta_\text{O}$, which indicates that $\theta_\text{O}$ of positive samples are treated as observed positive samples and the remaining positive samples and all negative samples are treated as unlabeled samples. Different data sets can be generated as the value of $\theta_\text{O}$ varies.

In the previous work \cite{2,4,7,10,32}, the experimental data were all PN data from public databases. But in real PU learning scenarios, negative samples are absent. Therefore, in order to make the research work closer to the real PU learning scenario, we use raw PU data sets such as Bank1, Bank2 and Bank3 to perform model training and evaluation for the first time.
\begin{table}[t]
\caption{Data sets description.}\smallskip
\centering
\resizebox{.98\columnwidth}{!}{
\smallskip\begin{tabular}{l|l|l|l}
\hline
\textbf{Data sets}  & \textbf{\#Ins.} & \textbf{\#Attr.} & \textbf{\#Positive ins.} \\ \hline
Bank1   & 1365550         & 30               & 5173                     \\ \hline
Bank2   & 1365550         & 30               & 4989                     \\ \hline
Bank3   & 1365550         & 30               & 5154                     \\ \hline
APS               & 16000           & 169              & 375                      \\ \hline
Credit Card Fraud & 284807          & 30               & 492                      \\ \hline
\end{tabular}
}
\label{table1}
\end{table}

\subsubsection{Evaluation Method}
We use different methods to evaluate the models for data sets from different scenes.

For a PU data set (Bank1, Bank2 and Bank3), we divide it into a training set and a test set, the PU model is trained with the training set and evaluated with the test set by AUL estimation method. We recall that in Section \ref{aul}, $\text{AUL}_\text{PU}$ is proved to be a reasonable metric in evaluating PU models using PU data sets.

For a PN data set (APS and Credit Card Fraud), we divide it into a PN training set and a PN test set. The PN training set is manually manipulated into a PU training set by selecting parameter $\theta_\text{O}$. Then the model is trained with the PU training set. Then the model is trained with the PU training set, and AUC is computed using the PN test set. In order to compare the two evaluation metrics AUC and $\text{AUL}_\text{PU}$, the PN test set is manually manipulated into a PU test set with the same parameter $\theta_\text{O}$, and then evaluate the PU model with the PU test set by AUL estimation method.

We compare our method against five PU learning methods: Elkan's method \cite{2}, Liu's method \cite{4}, BaggingPU \cite{7}, nnPU \cite{10} and PU-AUC \cite{32}.

For fair comparison, we use GBDT as classifier of Elkan's method, Liu's method, BaggingPU and ProbTagging in the experiments. To accelerate training, we use GPU version xgboost \cite{36} for Bank1, Bank2 and Bank3 and lightgbm \cite{37} for APS and Credit Card Fraud as their specific implementation methods of GBDT. In particular, both GPU version xgboost and lightgbm use their default parameters.

The implementation used for ProbTagging is described in Section \ref{implement}. the parameter $k$ is selected according to \ref{sec-selectk}. ProbTagging and BaggingPU both use $m = 50$ base classifiers. We use the open source code of \cite{10} to implement nnPU. There are three versions of nnPU: linear model, 3-layerfully-connected neural network and multi-layer fully-connected neural network in the open source code. We directly use the network structure and parameters given in the open source code and choose the best result of the three versions as the final result. In addition, we follow the implementation details of \cite{32} to implement PU-AUC.

Class prior knowledge is required in nnPU and PU-AUC. For APS and Credit Card Fraud, the class prior probabilities are set to the actual values in their original PN data. For Bank1, Bank2 and Bank3, since class prior knowledge is unknown and impossible to be obtained, it is reasonable to set the positive-class prior probability to the proportion of positive samples in original PU data sets.

We use cross-validation for each data set, and take the average result of each fold as the final result. For Bank1, Bank2, Bank3 and Credit Card Fraud, we use 3-fold cross-validation; for APS, we use 5-fold cross-validation.

\subsection{Results} \label{sec-result}
\subsubsection{Results of Real-world Data Sets}
The three data sets Bank1, Bank2 and Bank3 are real-world industrial PU data sets. Therefore, no PN labels are available for reference. The AUL estimation method proposed in Section \ref{aul} is used to evaluate models.

The results of real-world bank data sets are shown in Table \ref{table2}. We see that ProbTagging performs best on Bank1 and Bank3 and slightly worse than BaggingPU on Bank2. In summary, ProbTagging performs well on real-world extremely imbalanced PU data sets.

\begin{table}[t]
\caption{AUL estimation results of real-world bank data sets.}\smallskip
\centering
\resizebox{.9\columnwidth}{!}{\smallskip
\begin{tabular}{l|l|l|l}
\hline
AUL           & Bank1      & Bank2       & Bank3 \\ \hline
ProbTagging     & \textbf{0.6082} & 0.5971 & \textbf{0.6099}   \\ \hline
Elkan's method & 0.6052          & 0.5943          & 0.6063            \\ \hline
Liu's method  & 0.5884          & 0.5776          & 0.5849            \\ \hline
BaggingPU       & 0.6072          & \textbf{0.5976}          & 0.6086            \\ \hline
nnPU           & 0.5286          & 0.5186          & 0.5202            \\ \hline
PU-AUC  & 0.5455          & 0.5484          & 0.5533
\\ \hline
\end{tabular}
}
\label{table2}
\end{table}

\subsubsection{Results of Public Data Sets}
For a PN data set, we need to manipulate it to PU training set according to the parameter $\theta_\text{O}$. Table \ref{table3} shows the AUC results and AUL estimation results for each data set under different methods with the parameter $\theta_\text{O} = 0.5$. In this case, ProbTagging performs very well on both APS and Credit Card Fraud.

It is worth mentioning that in Table \ref{table3}, the AUL estimation on PU test sets gives the same assertion as the AUC on PN test sets about which model is better. We see that ProbTagging not only performs well under ordinary circumstances but also shows its robustness in the case of high noise.
\begin{table*}[t]
\centering
\caption{AUC and AUL estimation results of public data sets with $\theta_\text{O}$ = 0.5.}\smallskip
\begin{tabular}{|c|c|c|c|c|}
\hline
                          & \multicolumn{2}{c|}{APS}          & \multicolumn{2}{c|}{Credit Card Fraud} \\ \hline
\multicolumn{1}{|l|}{}         & AUC             & AUL             & AUC                & AUL               \\ \hline
ProbTagging              & \textbf{0.9921} & \textbf{0.9878} & \textbf{0.9757}    & \textbf{0.9794}   \\ \hline
Elkan's method          & 0.9896          & 0.9848          & 0.9612             & 0.9681            \\ \hline
Liu's method          & 0.9850          & 0.9806          & 0.9652             & 0.9703            \\ \hline
BaggingPU          & 0.9883          & 0.9837          & 0.9682             & 0.9730            \\ \hline
nnPU         & 0.9233          & 0.9271          & 0.8916             & 0.8858            \\ \hline
PU-AUC & 0.9660          & 0.9653          & 0.8768             & 0.8778            \\ \hline
\end{tabular}
\label{table3}
\end{table*}

Figure \ref{fig1} and \ref{fig2} present the results of different methods on Credit Card Fraud and APS as the value of $\theta_\text{O}$ changes, respectively. We note that a large value of ($1-\theta_\text{O}$) indicates a low quality of the data sets: we have many unlabeled samples but few positive samples. When ($1-\theta_\text{O}$) increases, the experimental results of each method become worse due to the decreasing number of observed positive samples. However, ProbTagging does not have a noticeable decline when ($1-\theta_\text{O}$) is particularly large. We see that ProbTagging not only performs well in smooth cases but also shows its robustness in extremely imbalanced situations.

\begin{figure}[t]
\centering
\includegraphics[width=0.98\columnwidth]{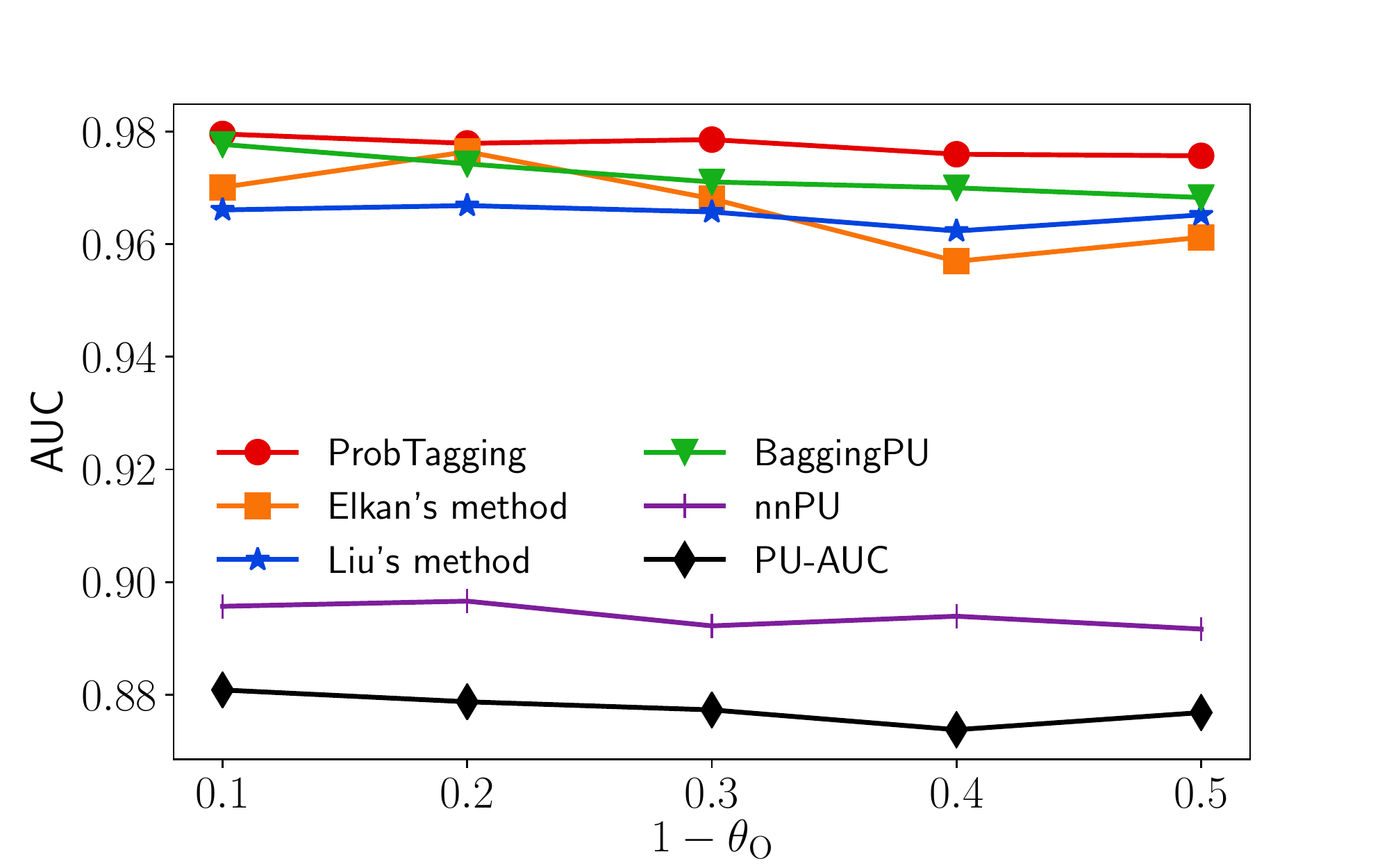} 
\caption{AUC results of Credit Card Fraud data sets.}
\label{fig1}
\end{figure}

\begin{figure}[t]
\centering
\includegraphics[width=0.98\columnwidth]{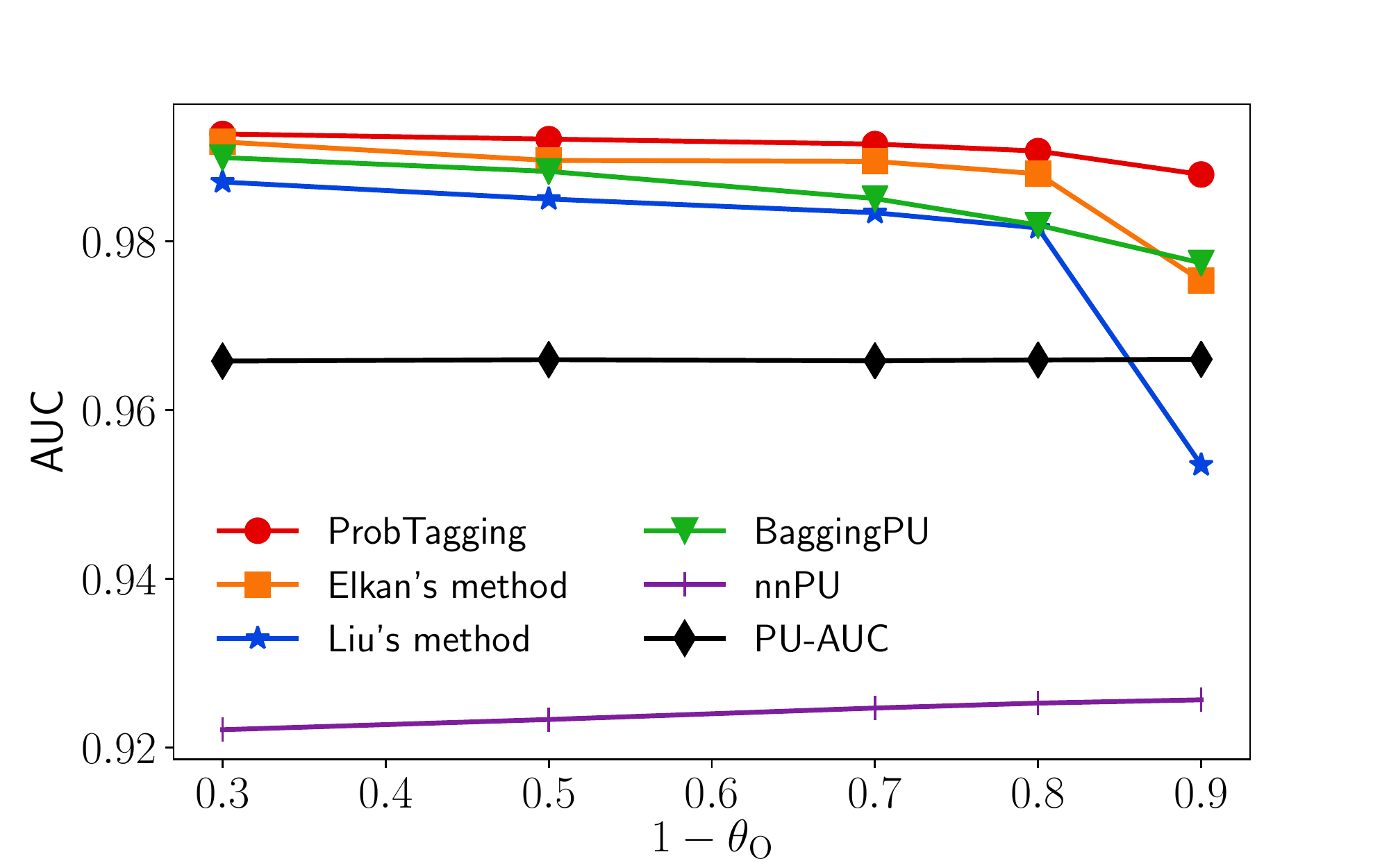} 
\caption{AUC results of APS data sets.}
\label{fig2}
\end{figure}
\subsubsection{Discussions}
Learning in extremely imbalanced PU data sets is very challenging. In the real-world PU data sets from industrial scenes in Table \ref{table2}, all methods do not have very well performance. This is because the industrial data sets tend to be very noisy. However, we see that the improvements from ProbTagging still bring many benefits to actual industrial scenarios.

We can see that nnPU and PU-AUC do not have very well performance on extremely imbalanced data sets, where the positive samples are rare (just a few percent or even a few thousandths of unlabeled samples). This is because when there are very few positive samples, the contribution of positive samples to the risk estimators calculation in nnPU and PU-AUC is very small, which is intended to be significant, and thus the risk of negative samples cannot be estimated correctly.

Each base classifier of BaggingPU uses the same positive samples, therefore they still have a strong correlation, and thus the variance of the final model is not effectively reduced. In the experiments, BaggingPU is worse than the ProbTagging.

The work of \cite{2} proposes that: under the assumption that the observed positive examples are selected randomly from total positive samples, PU model predicts probabilities that differ by only a constant factor from the true conditional probabilities of being positive. Because AUC depends on the sort of test samples, the constant factor does not affect the value of AUC. Therefore, when a data set satisfies the assumption, the PU model trained by Elkan's method is close to the corresponding PN model in terms of AUC. We can see that Elkan's method sometimes has a better performance than BaggingPU in the experiments.

There is a challenge for Liu's method that the initial model needs to be strong enough due to the large correlation between the initial and the final model, it is difficult to find pure negative or positive samples at the first step. In the experiments, the performance of Liu's method is a little worse than BaggingPU and Elkan's method.

\section{Conclusion}\label{sec6}
In this paper, we improve PU learning over state-of-the-art from two aspects: model evaluation and model training. We propose the practical AUL estimation without class prior knowledge. The AUL estimation method not only presents a new idea of model evaluation, but also could be helpful for other learning or mathematical problems in the future.

In addition, we also design a new training method called ProbTagging to improve the performance of extremely imbalanced PU learning. It is noted that ProbTagging is an abstract of learning methods, where the calculation of similarity and the base classifier vary with the learning task. The specific algorithm of ProbTagging in this paper uses $k$-NN for similarity calculation and GBDT as base classifier. Compared with state-of-the-art work, the experimental results show that ProbTagging can increase the AUC by up to 10\%. ProbTagging also provides a sample enhancement technique, and thus we can consider applying ProbTagging to other fields besides PU learning, such as multi-class semi-supervised learning.


\begin{thebibliography}{30}
\providecommand{\natexlab}[1]{#1}
\providecommand{\url}[1]{\texttt{#1}}
\expandafter\ifx\csname urlstyle\endcsname\relax
  \providecommand{\doi}[1]{doi: #1}\else
  \providecommand{\doi}{doi: \begingroup \urlstyle{rm}\Url}\fi

\bibitem[Chen \& Guestrin(2016)Chen and Guestrin]{36}
Chen, T. and Guestrin, C.
\newblock Xgboost: A scalable tree boosting system.
\newblock In \emph{Proceedings of the 22nd ACM SIGKDD international conference
  on knowledge discovery and data mining}, pp.\  785--794, 2016.

\bibitem[Chollet(2017)]{17}
Chollet, F.
\newblock Xception: Deep learning with depthwise separable convolutions.
\newblock In \emph{Proceedings of the IEEE conference on computer vision and
  pattern recognition}, pp.\  1251--1258, 2017.

\bibitem[Du~Plessis et~al.(2015)Du~Plessis, Niu, and Sugiyama]{35}
Du~Plessis, M., Niu, G., and Sugiyama, M.
\newblock Convex formulation for learning from positive and unlabeled data.
\newblock In \emph{International Conference on Machine Learning}, pp.\
  1386--1394, 2015.

\bibitem[Du~Plessis et~al.(2014)Du~Plessis, Niu, and Sugiyama]{9}
Du~Plessis, M.~C., Niu, G., and Sugiyama, M.
\newblock Analysis of learning from positive and unlabeled data.
\newblock In \emph{Advances in neural information processing systems}, pp.\
  703--711, 2014.

\bibitem[Dua \& Graff(2016)Dua and Graff]{Dua:2019}
Dua, D. and Graff, C.
\newblock Aps failure at scania trucks data set, 2016.
\newblock URL
  \url{https://archive.ics.uci.edu/ml/datasets/APS+Failure+at+Scania+Trucks}.

\bibitem[Elkan \& Noto(2008)Elkan and Noto]{2}
Elkan, C. and Noto, K.
\newblock Learning classifiers from only positive and unlabeled data.
\newblock In \emph{Proceedings of the 14th ACM SIGKDD international conference
  on Knowledge discovery and data mining}, pp.\  213--220. ACM, 2008.

\bibitem[Friedman(2002)]{38}
Friedman, J.~H.
\newblock Stochastic gradient boosting.
\newblock \emph{Computational statistics \& data analysis}, 38\penalty0
  (4):\penalty0 367--378, 2002.

\bibitem[Ke et~al.(2017)Ke, Meng, Finley, Wang, Chen, Ma, Ye, and Liu]{37}
Ke, G., Meng, Q., Finley, T., Wang, T., Chen, W., Ma, W., Ye, Q., and Liu,
  T.-Y.
\newblock Lightgbm: A highly efficient gradient boosting decision tree.
\newblock In \emph{Advances in neural information processing systems}, pp.\
  3146--3154, 2017.

\bibitem[Kingma et~al.(2014)Kingma, Mohamed, Rezende, and Welling]{21}
Kingma, D.~P., Mohamed, S., Rezende, D.~J., and Welling, M.
\newblock Semi-supervised learning with deep generative models.
\newblock In \emph{Advances in neural information processing systems}, pp.\
  3581--3589, 2014.

\bibitem[Kiryo et~al.(2017)Kiryo, Niu, du~Plessis, and Sugiyama]{10}
Kiryo, R., Niu, G., du~Plessis, M.~C., and Sugiyama, M.
\newblock Positive unlabeled learning with non-negative risk estimator.
\newblock In \emph{Advances in neural information processing systems}, pp.\
  1675--1685, 2017.

\bibitem[Krizhevsky et~al.(2012)Krizhevsky, Sutskever, and Hinton]{15}
Krizhevsky, A., Sutskever, I., and Hinton, G.~E.
\newblock Imagenet classification with deep convolutional neural networks.
\newblock In \emph{Advances in neural information processing systems}, pp.\
  1097--1105, 2012.

\bibitem[Lee \& Liu(2003)Lee and Liu]{1}
Lee, W.~S. and Liu, B.
\newblock Learning with positive and unlabeled examples using weighted logistic
  regression.
\newblock In \emph{Proceedings of the Twentieth International Conference on
  Machine Learning}, volume~3, pp.\  448--455, 2003.

\bibitem[Lee et~al.(2012)Lee, Hu, Cheng, and Hsieh]{25}
Lee, Y., Hu, P.~J., Cheng, T., and Hsieh, Y.
\newblock A cost-sensitive technique for positive-example learning supporting
  content-based product recommendations in b-to-c e-commerce.
\newblock \emph{Decision Support Systems}, 53\penalty0 (1):\penalty0 245--256,
  2012.

\bibitem[Li \& Hua(2014)Li and Hua]{8}
Li, C. and Hua, X.
\newblock Towards positive unlabeled learning for parallel data mining: a
  random forest framework.
\newblock In \emph{International Conference on Advanced Data Mining and
  Applications}, pp.\  573--587. Springer, 2014.

\bibitem[Li \& Liu(2003)Li and Liu]{3}
Li, X. and Liu, B.
\newblock Learning to classify texts using positive and unlabeled data.
\newblock In \emph{International Joint Conference on Artificial Intelligence},
  volume~3, pp.\  587--592, 2003.

\bibitem[Li et~al.(2009)Li, Yu, Liu, and Ng]{26}
Li, X., Yu, P.~S., Liu, B., and Ng, S.
\newblock Positive unlabeled learning for data stream classification.
\newblock In \emph{Proceedings of the 2009 SIAM International Conference on
  Data Mining}, pp.\  259--270. SIAM, 2009.

\bibitem[Liu et~al.(2003)Liu, Dai, Li, Lee, and Philip]{4}
Liu, B., Dai, Y., Li, X., Lee, W.~S., and Philip, S.~Y.
\newblock Building text classifiers using positive and unlabeled examples.
\newblock In \emph{IEEE International Conference on Data Mining}, volume~3,
  pp.\  179--188. Citeseer, 2003.

\bibitem[machine learning~group ULB(2017)]{credit-kaggle}
machine learning~group ULB.
\newblock Credit card fraud detection data set, 2017.
\newblock URL \url{https://www.kaggle.com/mlg-ulb/creditcardfraud}.

\bibitem[Miyato et~al.(2018)Miyato, Maeda, Koyama, and Ishii]{22}
Miyato, T., Maeda, S., Koyama, M., and Ishii, S.
\newblock Virtual adversarial training: a regularization method for supervised
  and semi-supervised learning.
\newblock \emph{IEEE transactions on pattern analysis and machine
  intelligence}, 41\penalty0 (8):\penalty0 1979--1993, 2018.

\bibitem[Mordelet \& Vert(2014)Mordelet and Vert]{7}
Mordelet, F. and Vert, J.
\newblock A bagging svm to learn from positive and unlabeled examples.
\newblock \emph{Pattern Recognition Letters}, 37:\penalty0 201--209, 2014.

\bibitem[Oliver et~al.(2018)Oliver, Odena, Raffel, Cubuk, and Goodfellow]{23}
Oliver, A., Odena, A., Raffel, C.~A., Cubuk, E.~D., and Goodfellow, I.
\newblock Realistic evaluation of deep semi-supervised learning algorithms.
\newblock In \emph{Advances in Neural Information Processing Systems}, pp.\
  3235--3246, 2018.

\bibitem[Sakai et~al.(2018)Sakai, Niu, and Sugiyama]{32}
Sakai, T., Niu, G., and Sugiyama, M.
\newblock Semi-supervised auc optimization based on positive-unlabeled
  learning.
\newblock \emph{Machine Learning}, 107\penalty0 (4):\penalty0 767--794, 2018.

\bibitem[Serfling(2009)]{28}
Serfling, R.~J.
\newblock \emph{Approximation theorems of mathematical statistics}.
\newblock John Wiley \& Sons, 2009.

\bibitem[Simonyan \& Zisserman(2015)Simonyan and Zisserman]{16}
Simonyan, K. and Zisserman, A.
\newblock Very deep convolutional networks for large-scale image recognition.
\newblock In \emph{International Conference on Learning Representations}, 2015.

\bibitem[Tax \& Duin(2004)Tax and Duin]{14}
Tax, D.~M. and Duin, R.~P.
\newblock Support vector data description.
\newblock \emph{Machine learning}, 54\penalty0 (1):\penalty0 45--66, 2004.

\bibitem[Tolles \& Meurer(2016)Tolles and Meurer]{34}
Tolles, J. and Meurer, W.~J.
\newblock Logistic regression: relating patient characteristics to outcomes.
\newblock \emph{Jama}, 316\penalty0 (5):\penalty0 533--534, 2016.

\bibitem[Tuff{\'e}ry(2011)]{29}
Tuff{\'e}ry, S.
\newblock \emph{Data mining and statistics for decision making}.
\newblock John Wiley \& Sons, 2011.

\bibitem[Xie \& Li(2018)Xie and Li]{33}
Xie, Z. and Li, M.
\newblock Semi-supervised auc optimization without guessing labels of unlabeled
  data.
\newblock In \emph{Thirty-Second AAAI Conference on Artificial Intelligence},
  2018.

\bibitem[Zhang \& Zuo(2009)Zhang and Zuo]{5}
Zhang, B. and Zuo, W.
\newblock Reliable negative extracting based on knn for learning from positive
  and unlabeled examples.
\newblock \emph{Journal of Computers}, 4\penalty0 (1):\penalty0 94--101, 2009.

\bibitem[Zhu \& Ghahramani(2002)Zhu and Ghahramani]{24}
Zhu, X. and Ghahramani, Z.
\newblock Learning from labeled and unlabeled data with label propagation.
\newblock \emph{Technical Report CMU-CALD-02-107}, 2002.

\end{thebibliography}

\end{document}